\newtheorem{prop}{\textbf{Proposition}}
\title{\LARGE \bf
Coordinating Large-Scale Robot Networks with Motion and Communication Uncertainties for Logistics Applications
}
\author{Zhe~Liu, Hesheng~Wang,~\IEEEmembership{Senior~Member,~IEEE,} Shunbo~Zhou, Yi~Shen and Yun-Hui~Liu,~\IEEEmembership{Fellow,~IEEE}% <-this % stops a space
% <-this % stops a space
%\thanks{This work is supported in part by the Natural Science Foundation of China under Grant U1613218 and the Hong Kong Research Grants Council under Grant 414813 and 14204814.}
\thanks{This work is supported by Hong Kong ITC under Grant ITS/448/16FP, and T Stone Robotics Institute, The Chinese University of Hong Kong.}
\thanks{Z.~Liu, S.~Zhou, and Y.-H.~Liu are with the Department of Mechanical and Automation Engineering, The Chinese University of Hong Kong, Hong Kong. H.~Wang and Y.~Shen are with the Department of Automation, Shanghai Jiao Tong University, China. Corresponding author: H.~Wang.}}
\begin{document}

\maketitle
\thispagestyle{empty}
\pagestyle{empty}

%%%%%%%%%%%%%%%%%%%%%%%%%%%%%%%%%%%%%%%%%%%%%%%%%%%%%%%%%%%%%%%%%%%%%%%%%%%%%%%%
\begin{abstract}
In this paper, we focus on the problem of task allocation, cooperative path planning and motion coordination of the large-scale system with thousands of robots, aiming for practical applications in robotic warehouses and automated logistics systems.
Particularly, we solve the life-long planning problem and guarantee the coordination performance of large-scale robot network in the presence of robot motion uncertainties and communication failures. A hierarchical planning and coordination structure is presented. The environment is divided into several sectors and a dynamic traffic heat-map is generated to describe the current sector-level traffic flow. In task planning level, a greedy task allocation method is implemented to assign the current task to the nearest free robot and the sector-level path is generated by comprehensively considering the traveling distance, the traffic heat-value distribution and the current robot/communication failures. In motion coordination level, local cooperative A* algorithm is implemented in each sector to generate the collision-free road-level path of each robot in the sector and the rolling planning structure is introduced to solve problems caused by motion and communication uncertainties. The effectiveness and practical applicability of the proposed approach are validated by large-scale simulations with more than one thousand robots and real laboratory experiments.
\end{abstract}
\vspace{-0.1cm}

%%%%%%%%%%%%%%%%%%%%%%%%%%%%%%%%%%%%%%%%%%%%%%%%%%%%%%%%%%%%%%%%%%%%%%%%%%%%%%%%
\section{Introduction}
In logistics applications, mobile robots can be used for automatic parcel sorting in robotic warehouses, pickup and delivery in unmanned storage systems, cargo transportation in autonomous container piers, and mail service in office environments \cite{Guizzo2008,myits,Coltin2014,Digani2015}. One of the most successful applications is the kiva system \cite{Guizzo2008,Wurman2008}, where more than five hundreds of autonomous robots are implemented to transport inventory pods in warehouses. Implementing the large-scale robotic transportation system instead of human works contributes to improving the warehouse working efficiency, increasing transportation flow capacities and reducing labor costs. In such systems, task planning and motion coordination are the most challenging problems, which greatly affects the efficiency and reliability of the overall warehousing system.

\begin{figure}[!t]
\centering
\includegraphics[width=0.8\columnwidth]{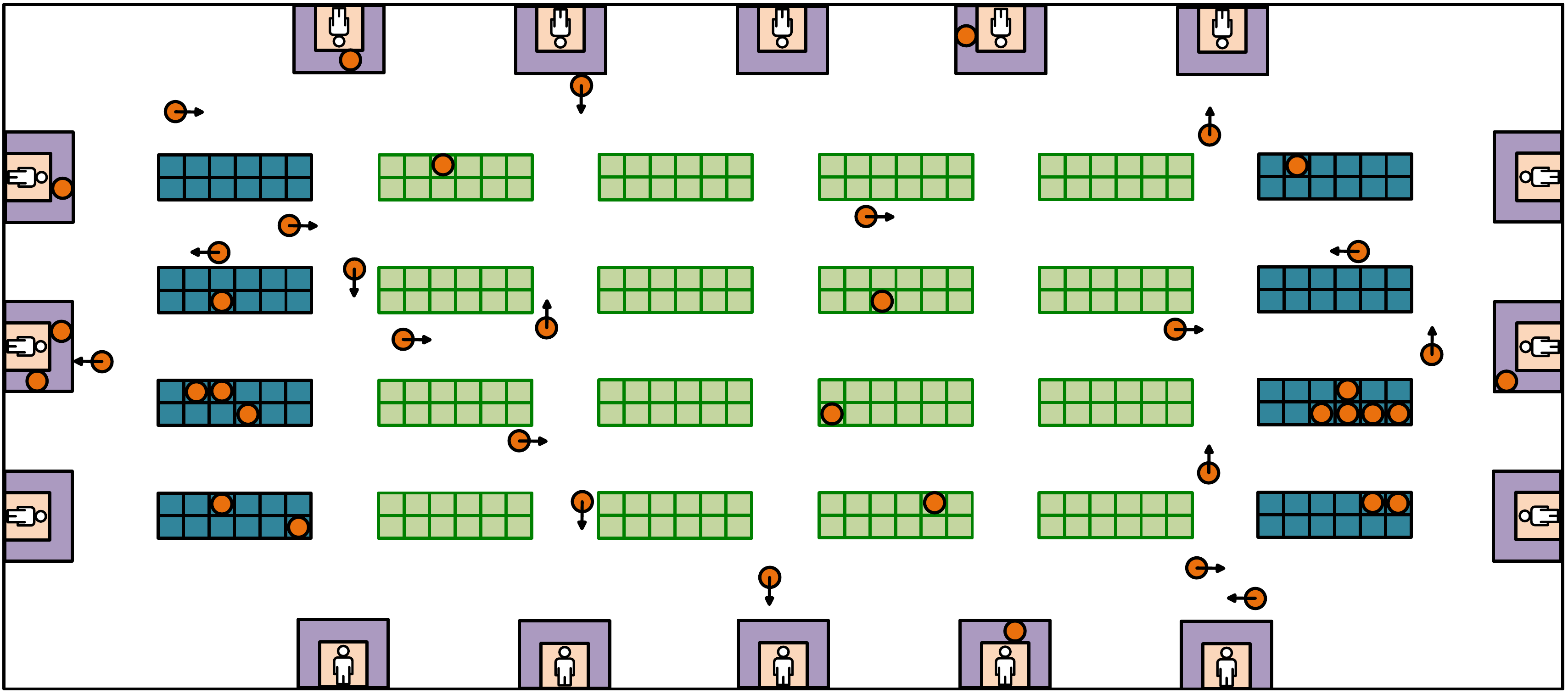}
\caption{An example of the robotic warehousing system.}
\label{figsys}
\vspace{-0.7cm}
\end{figure}

In this paper, we focus on the task planning and motion coordination problem of large-scale robotic systems in warehousing and logistics applications (as shown in Fig. \ref{figsys}). We have investigated the current situation in China and find that there is a dichotomy between the academic researches and the practical applications, and there still exist several open challenges: 1) How to build a well-constructed environment and guarantee the well-formulation of the planning instance. This problem is of great importance in the life-long planning instance, since compared with the traditional one-shot problem, the life-long problem may not always be solvable. 2) How to balance the traffic flow and achieve the path coordination purpose in the task planning stage. In large-scale systems with hundreds (or possibly thousands) of mobile robots, the density distribution of mobile robots should be balanced in order to achieve traffic flow equilibrium. So the path coordination problem is a very critical issue, which contributes to alleviating traffic congestions and avoiding shock waves. 3) How to guarantee the motion coordination performance during task accomplishing process and avoid traffic jams and robot deadlocks. These problems are very challenging since unpredictable robot motion uncertainties and temporary communication failures are un-neglected in practical applications, which will greatly affect the optimization performance of the previous planning results and even lead to task failures. 4) Increasing system reliability, reducing algorithm complexity and fulfilling real-time requirement are also challenges in large-scale robotic warehousing systems.

In this paper, we consider the above challenges in the system formulation and present a hierarchical planning and coordination approach. The main idea of the proposed approach and the contributions can be summarized as follows:

1. Considering practical warehousing environments, we present several criterions to formulate a well-formed problem of the life-long planning and coordination task, which guarantees the solvability of the life-long problem.

2. We present a hierarchical planning and coordination approach as shown in Fig. \ref{figblock}, which consists of a task planning level and a motion coordination level. In task planning level, the warehouse environment is partitioned into several sectors and a traffic heat-map is maintained to describe the current robot density distribution. In sector topology graph, the cost of traveling through a sector is defined by combining the traveling distance, the current heat-value and the current motion/communication uncertainties of the sector. Then a greedy task allocation method is implemented to assign each task to the nearest free robot and the high-level path planning is conducted in the sector topology graph. These manners aim to ensure the traffic flow equilibrium and guarantee the real-time performance. In motion coordination level, local cooperative A* algorithm in the spatial-temporal road topology is implemented in each sector and the entryway reservation mechanism is used to coordinate each pair of neighboring sectors. The online rolling planning structure is introduced in order to improve the algorithm tolerance to robot motion delays and communication failures. Finally, in the individual robot control, each robot tracks its planned road path and avoids collisions individually.

3. To the best of our knowledge, in existing literature, this paper is the first to present warehousing simulations with large-scale networks with more than one thousand robots in the presence of motion/communication uncertainties. What's more, laboratory experiments with a group of mobile robots are performed to validate the effectiveness, efficiency, robustness and practical applicability of the proposed approach.

\begin{figure}[!t]
\centering
\includegraphics[width=1\columnwidth]{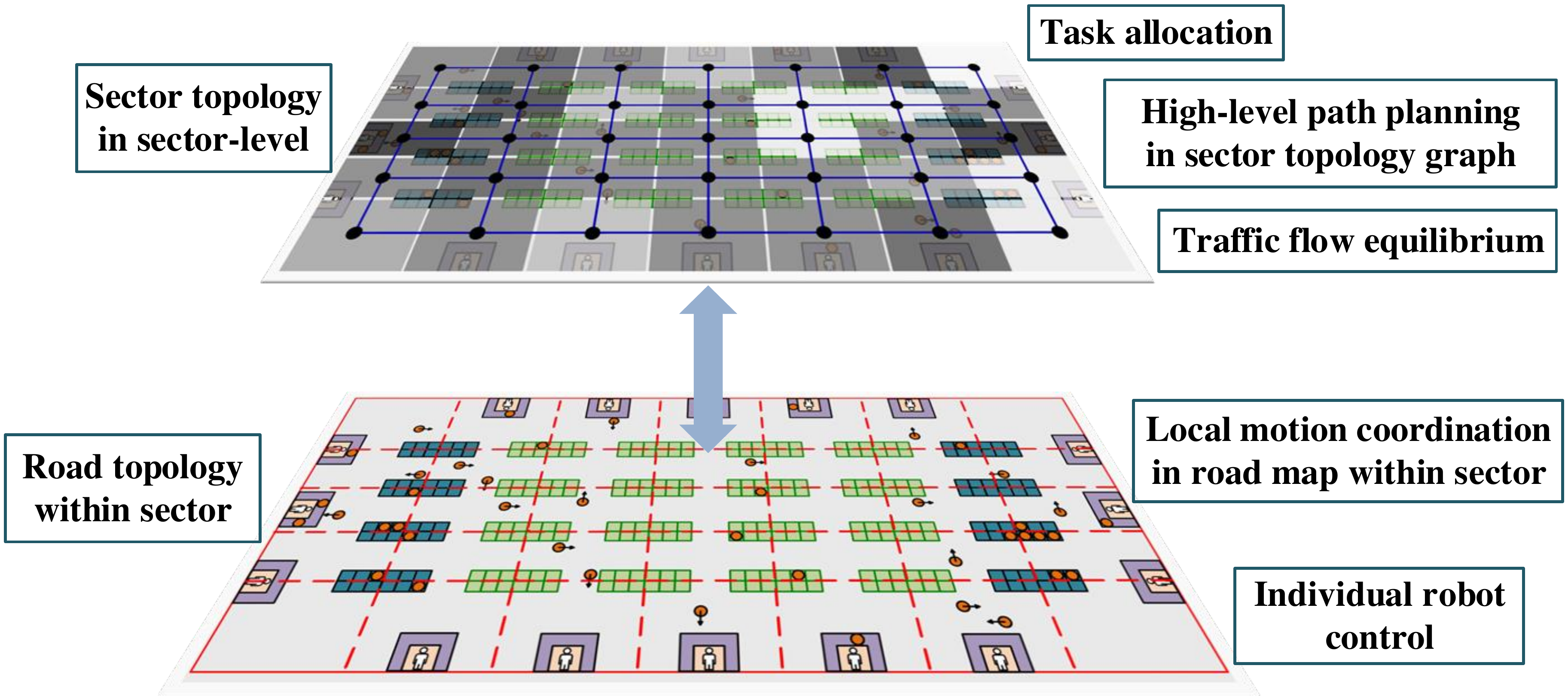}
\caption{System structure of the proposed hierarchical approach.}
\label{figblock}
\vspace{-0.5cm}
\end{figure}

\section{Related Work}
A directly relevant research topic of the problem studied in this paper is the multi-robot path planning and motion coordination problem, which has been investigated actively for many years. Traditional researches can be mainly divided into the centralized approach and decentralized approach. Generally speaking, the centralized approaches \cite{Yu2016} aim to find out the optimal solution (or increase the solution optimality as much as possible) and decrease the computation complexity, while the decentralized approaches \cite{Smolic-Rocak2010} mainly focus on increasing system flexibility, achieving distributed implementations, fulfilling real-time requirements and resolving local traffic jams and robot deadlocks. Centralized approaches can be further divided into the reduction-based methods \cite{Yu2016} and the A*-based methods \cite{Wurman2008}. In reduction-based methods, the path planning problem is firstly transformed into some well-studied mathematical programming problems \cite{Yu2016,Ma2016}, such as the time-expanded network flow problem, then the optimal solution (or near-optimal solutions) can be obtained by existing programming algorithms (such as the integer linear programming). However, typically, these methods are only efficient to optimize the makespan index on small-scale problems, and the algorithm complexity in problem translation processes also can not be neglected. Among A*-based methods, the hierarchical cooperative A* algorithm and its extensions \cite{Roozbehani2011} have good realtime performance and are promising in solving large-scale problems, although they can not guarantee the optimal performance. Recently, searching tree based algorithms \cite{Sharona2015,Ma2017-2} are presented to decrease the cooperative A* searching complexity and improve the optimality, however, the performance in large-scale problems has not been validated. Decentralized approaches can be further divided into the local coordination based method and the prioritized planning based method. Local coordination based methods are usually based on pre-defined traffic rules. When confliction occurs, robots will replan their local routes \cite{Regele2006}, coordinate their velocities \cite{Berg2011}, or change their time schedules \cite{Smolic-Rocak2010} to avoid deadlocks and traffic jams. In prioritized planning based methods \cite{Cap2015}, each robot has a priority value and plans its individual path iteratively. The path coordination is achieved by continuously re-planning operations of the robots with lower priorities. Decentralized approaches are mostly incomplete and un-reliable in practical applications, and the high requirements for the robot perception and communication abilities also limit these approaches' practical applicabilities.

Please note that the approaches mentioned above mostly concentrate on the one-shot problem, where all the tasks are predefined and each robot needs to deliver its allocated task while avoiding conflictions with other robots. All the task positions are different from each other and the system terminates once all robots arrive at their destinations. However, the life-long planning and coordination problem may not always be solvable and, compared with finding the optimal solution, ensuring the real-time performance and improving the system reliability are more important. In \cite{Wurman2008}, the authors incorporate the utility- and auction-based method to achieve online task allocations, and introduce the space reservation mechanism and mono-direction highways to avoid robot collisions \cite{Roozbehani2011}. However, in the presence of robot motion delays, this approach needs to re-plan the path of corresponding robots frequently, and the traffic jams and deadlocks can not be avoided completely. Recently, the authors in \cite{Ma2017} solve the life-long planning and coordination tasks based on the token passing mechanism and the cooperative A* method. However, since the planned paths and assigned tasks of all robots should be shared synchronously and sent to the robot who has the token currently, the computational complexity, storage complexity and scalability of this approach can not be guaranteed. Furthermore, this approach is based on the assumptions of ideal vehicle motion control performance and communication performance, ignoring the communication and motion uncertainties also limits its practical applicability.

\section{Well-Formulation of Life-Long Planning}
Consider a warehouse environment as shown in Fig. \ref{figsys}, where the blue blocks represent the robot station area (such as the charging station or parking station in a warehouse), the green blocks represent the task pickup station area (such as the inventory pod area) and the other blocks represent the working station area (such as the cargo delivery area). All the free robots stay in the robot station area and wait for the task assignments. Once a new published task is assigned to a free robot, the robot will move from the robot station to the corresponding task pickup station to pick up the task and then deliver it to the corresponding working station. In addition, the robot should avoid collisions with other robots during the movement and return to one of the free robot stations after accomplishing the assigned tasks.

A well-constructed warehouse environment should fulfill the following criterions:
\begin{itemize}
\item[C1] There are no fewer robot stations than robot number.
\item[C2] Between any two stations (including the robot stations, task pickup stations and working stations), there exists at least one path that traverses no other stations.
\end{itemize}

A new assigned task should fulfill the following criterions:
\begin{itemize}
\item[C3] For a new task, its corresponding task pickup station and working stations are different with those of all the unaccomplished previous assigned tasks.
\item[C4] If C3 can not be fulfilled, the new task should be put into a waiting list to wait for the accomplishment of the conflicting task.
\end{itemize}

The delivering robot should fulfill the following criterion:
\begin{itemize}
\item[C5] Once the assigned task is accomplished, the robot should either return to one of the free robot stations, or move away from the working station to the task pickup station area to deliver a new assigned task.
\end{itemize}

\begin{prop}
The Life-long task planning and motion coordination problem is always solvable, i.e., the problem is well-formulated, if the above criterions C1-C5 are fulfilled.
\end{prop}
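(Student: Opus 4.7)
The plan is to argue by induction on the discrete sequence of task-arrival and task-completion events, maintaining the invariant that every robot is either parked at a station or executing a feasible collision-free plan, and that every currently active task has a pickup/working destination pair not simultaneously claimed by any other active task. Solvability then reduces to showing this invariant is preserved across every event.

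First I would dispatch the base case: initially every robot can be parked in a robot station, which is feasible by C1 since there are at least as many robot stations as robots, and no task is active so the destination-disjointness part of the invariant is vacuous. For the inductive step I would split on the event type. If the next event is a task completion, C5 forces the finishing robot either to retreat to a free robot station (which exists by C1 since the robot just released its previous slot) or to depart immediately toward a new pickup, so the working station is vacated and never becomes a permanent obstruction. If the event is a task arrival, C3 together with the deferral mechanism of C4 guarantees that the new task's endpoints are disjoint from the endpoints of all currently active tasks, so the greedy allocator can assign it to the nearest free robot without inducing a destination collision.

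Next I would invoke C2 to establish route feasibility for the newly dispatched robot. Since any two stations are connected by at least one path that traverses no other station, the robot can always find a geometric route from its home station to the pickup station, and onward to the working station, that never has to cross a cell permanently held by a parked or idle robot. Combined with the hierarchical sector decomposition and local cooperative A\textsuperscript{*} scheme described earlier, transient conflicts among moving robots along such routes can be resolved by local temporal scheduling inside each sector, with the station-avoiding route of C2 serving as a guaranteed fallback whenever a preferred sector-level route is congested.

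The main obstacle I anticipate is ruling out persistent circular deadlock in the motion coordination layer: even when individual routes exist, several moving robots could in principle mutually block one another. Here I would argue that the joint strength of C1, C2 and C5 forbids an indefinite circular wait, because any robot participating in a would-be cycle can always either retreat to an unoccupied robot station or reroute along a station-avoiding alternative, which breaks the cycle; meanwhile C4 prevents the task arrival process itself from forcing unboundedly many robots to compete for the same bottleneck simultaneously. Verifying this no-deadlock property rigorously is the delicate step, but once it is in place the inductive invariant survives every event and the life-long instance is shown to be solvable, establishing well-formulation.
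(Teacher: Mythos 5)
Your inductive invariant is reasonable bookkeeping for C1--C5, but the argument has a genuine gap exactly where you flag it: the no-deadlock property for \emph{concurrently} moving robots. Your proposed resolution --- that any robot in a would-be circular wait can ``retreat to an unoccupied robot station or reroute along a station-avoiding alternative'' --- does not close the cycle-breaking argument, because the retreat path or the alternative route may itself be blocked by other moving robots, and C2 only guarantees a path avoiding \emph{stations}, not a path avoiding other robots in transit. Nothing in C1--C5 prevents several simultaneously dispatched robots from mutually obstructing one another on such paths, so the invariant is not shown to survive the inductive step when multiple robots are in motion.

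The paper avoids this difficulty entirely by proving a weaker but sufficient claim: the proposition only asserts \emph{solvability}, i.e., the existence of some feasible execution, not that the concurrent coordination scheme is deadlock-free. The paper's proof exhibits the trivial serialized schedule: order robots by task assignment, let the lower-priority robots wait parked at their stations (feasible by C1), and let exactly one robot move at a time. A lone moving robot conflicts with nobody, since by C2 it has a route between any two stations that traverses no other station, and every other robot is parked at a station; C3--C5 guarantee its endpoints are not occupied by another active task and that it vacates the working station afterward. You essentially have all the ingredients for this fallback --- you invoke each criterion in the right place --- but you aim at the stronger concurrent statement and leave its crux unproved. Replacing your deadlock analysis with the worst-case serialization observation turns your sketch into a complete proof.
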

\begin{proof}
C1-C2 ensure that robots have enough parking stations and a robot which stays in a station will not conflict with others. C3-C5 ensure that a robot which is delivering a previous assigned task will not conflict with others which are going to accomplish new tasks. If a robot can not find out a collision-free path, it can stay in a station and wait for accomplishments of all the conflicting tasks, then it can find out at least one free path to deliver its task. So in the worst case, robots can deliver their tasks one by one according to the task assignment order, i.e., each robot has a priority value defined by its task assignment order, the robot with the lower priority will stay in the robot station until all the robots with higher priorities have accomplished their tasks and returned to robot stations. Thus the life-long task planning and motion coordination problem is always solvable.
\end{proof}
\section{Hierarchical Planning and Coordination}

\subsection{Centralized High-Level Task Planning}
The objective in the task planning level is to assign each new published task to a free robot and plan the corresponding sector-level path. %Except the travelling distance, the current robot density distribution is also considered in the path planning in order to achieve sector-level traffic flow equilibrium. 
%The detailed algorithms implemented in this level are presented as follows:

\subsubsection{Environment Partitioning}
$\ $

As shown in the bottom of Fig. \ref{figblock}, we divide the environment into several sectors by utilizing the following criterions: 1. There is only one intersection in each sector; 2. For two neighboring sectors, there is only one entryway from one sector to the other, and vice versa; 3. For each sector, there is at least one path from each entryway to each exit-way, and the path does not traverse any station in the sector. These criterions aim to guarantee the solvability of the road-level path planning problem within each sector and minimize the complexity of the low-level motion coordination problem.% to guarantee the real-time performance.

\subsubsection{Heat-map Generation}
$\ $

A traffic heat-map is generated and maintained to describe the current robot density distribution. The heat-value of each sector is defined as the ratio of the current number of the robots in the sector to the maximum allowable robot number of the sector (which is defined to be linearly proportional to the size of the sector). Please note that the robots which stay in robot stations without any assigned task will not be considered in the heat-value calculation.

%Another factor which can also be considered in the traffic heat-map is the predicted feature traffic level of each sector. This can be estimated by considering the assigned tasks that have been planned but not executed yet.

\subsubsection{Sector-Level Topology Graph Generation}
$\ $

Based on the partitioning results, the environment can be described by a sector topology graph $G$ (as shown in the upper part of Fig. \ref{figblock}), where each vertex $v_i$ represents a sector $s_i$ and each edge $(v_i,v_j)$ represents that there is an entryway from sector $s_i$ to $s_j$. The edge weight $w_{ij}$ is defined as
\begin{equation}
w_{ij}=d_{ij}\times (1+k_hh_j+k_ll_j),\nonumber
\end{equation}
where $d_{ij}$ is the travelling distance from the center of sector $s_i$ to the center of sector $s_j$, $h_j$ is the current traffic heat-value of sector $s_j$, $l_j$ represents the normalized current number of abnormal robots in sector $s_j$, $k_h$ and $k_l$ are positive weighting coefficients. Abnormal robots include the robots with motion delays and the robots with communication failures.

\subsubsection{Task Allocation and Sector-Level Path Planning}
$\ $

In the life-long task planning problem, all the transportation tasks are published online. An unassigned task list is maintained and updated in the task planning system, which contains all the unassigned tasks with the order of the task priorities. The task priority can be defined by considering the publishing time, importance or urgency of each task. Then the greedy task allocation method is implemented to assign each task to the nearest free robot and the corresponding high-level path planning is conducted in the sector topology graph $G$ based on the A* algorithm. Fig. \ref{figtask} shows an example of the sector-level path planning.
\vspace{-0.2cm}
\begin{figure}[!h]
\centering
\includegraphics[width=0.9\columnwidth]{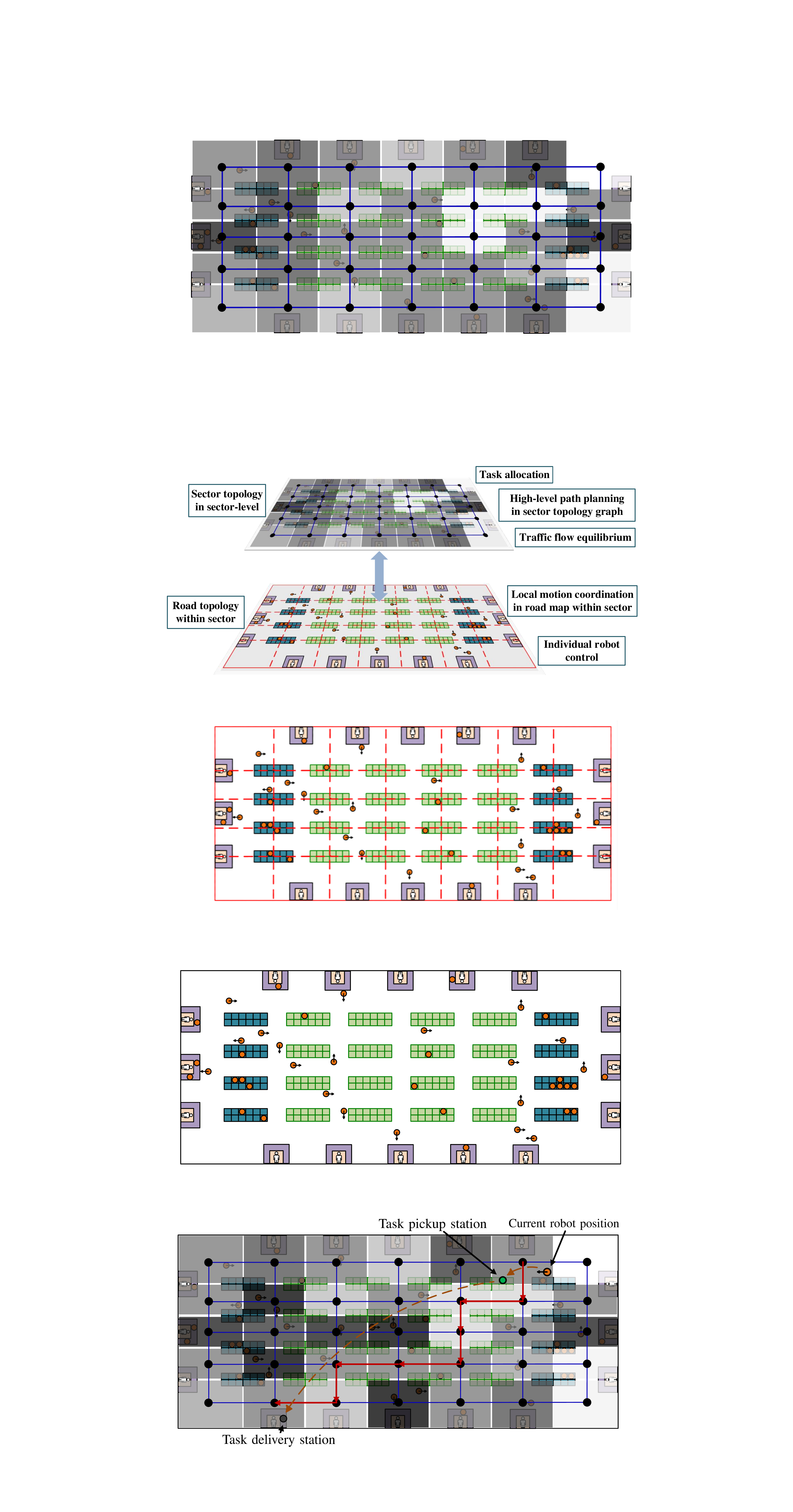}
\caption{An example of the sector-level path planning. The black nodes and blue lines represent the sector-level topology graph, where the gray scale of the sector is darker the traffic heat-value is larger, and the red lines represent the sector-level path planning result.}
\label{figtask}
\vspace{-0.3cm}
\end{figure}

\subsubsection{Discussions}
$\ $

The strategies mentioned above aim to ensure the sector-level traffic flow equilibrium and guarantee the real time requirements in the centralized task planning level.

Balancing the traffic flow distribution in the whole environment contributes to reducing the probability of robot collisions and congestions, increasing transportation flow capacities and improving the warehouse working efficiency. Only considering the shortest path in each task planning process can not ensure the expected optimization performance, since the robot congestions (especially in crossroads or trunk roads) can not be neglected and will greatly affect the real performance of the previous planning results. Cooperative A* algorithm in the spatial-temporal road topology \cite{Roozbehani2011,Sharona2015} is commonly used in existing approaches to plan a collision-free path for each robot, however, this approach requires that each robot should follow its time schedule strictly. In practical applications, this requirement usually can not be ensured since the manipulation time in task loading and unloading processes, the movement time from one place to another place and the waiting time in working stations usually can not be accurately predicted or controlled.

In this paper, we introduce the traffic heat-value into the travelling cost to achieve the traffic flow equilibrium purpose. Although this approach may not avoid traffic jams completely, we can greatly put up the travelling cost of the sector with traffic jams or even close the sector temporarily in the worst case. Then the traffic congestion can be alleviated gradually and the deadlocks can be avoided.

\subsection{Decentralized Low-Level Motion Coordination}
Motion coordination is achieved in each sector in a decentralized manner, this will greatly reduce the complexity of the coordination problem while maintaining the solvability of the whole life-long problem. The objective in this level is to plan the road-level local path of each robot travelled through the section and coordinate the motion of each robot to avoid collisions. %The detailed algorithms implemented in this level are presented as follows:
\vspace{-0.2cm}
\begin{figure}[!h]
\centering
\includegraphics[width=0.8\columnwidth]{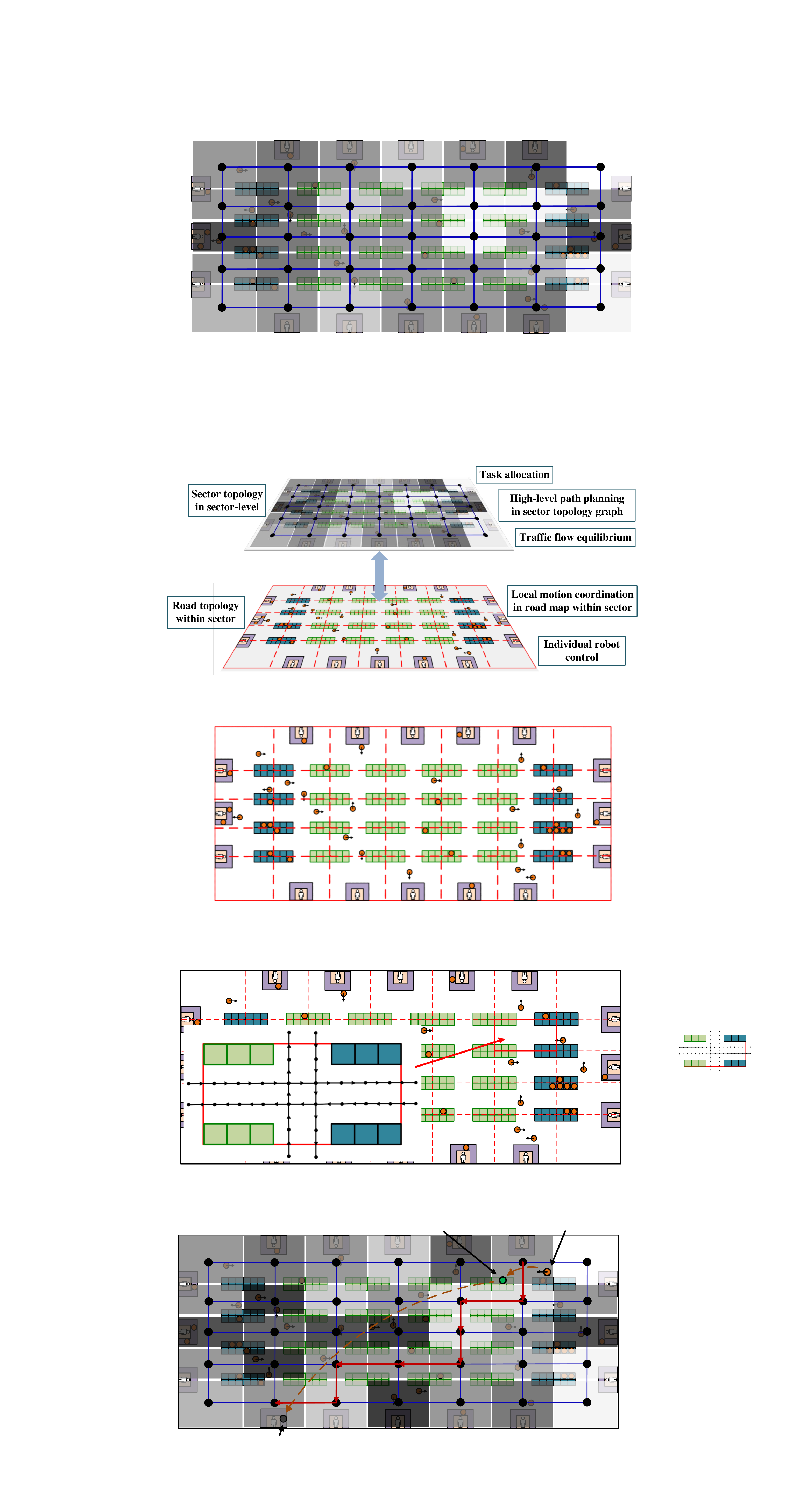}
\caption{An example of the road topology map in a sector.}
\label{figroad}
\vspace{-0.3cm}
\end{figure}
\subsubsection{Road Segmentation and Road Topology Map Generation}
$\ $

The roads in each sector are divided into several segments and a road topology map is generated and represented as a graph $M$, where each vertex $u_i$ represents a place $q_i$, each edge $(u_i,u_j)$ represents that there is a road segment from $q_i$ to $q_j$ that does not travel through any other vertex, and the edge weight is defined by the travelling distance of the corresponding road segment. Inspired by the highway concept presented in \cite{Roozbehani2011}, we define the road segments in task and robot station areas as mono-direction roads in order to reduce congestions and improve the traffic speed. Fig. \ref{figroad} shows an example of the road topology map within a sector.

\subsubsection{Road-Level Path Planning and Local Motion Coordination}
$\ $

Based on the road topology map, the cooperative A* algorithm and the conflict-based searching strategy in the spatial-temporal space \cite{Sharona2015} are implemented in each sector to plan a collision-free road-level path (within the sector) for each robot which travels through the sector. This approach ensures the optimal performance of the local path planning.

Since the robots in each sector changes dynamically (i.e., at each time step, a new robot may move into the sector or a previous robot may move out), we introduce the online roll planning structure to replan the local path of each robot at each time step, i.e., in each time step, the planner of each sector plans the whole road-level path for each robot in the sector, but each robot only moves one step (travels through the first road segment in its planned road-level path or stays in its current position to avoid conflicts with another robot), and this process will be repeated iteratively. The introduction of the rolling planning structure also contributes to solve the motion delay problem caused by uncertainties mentioned in Section III.A.5, since the path of the delayed robots will be replanned in the next time step.

An entryway reservation mechanism is introduced to coordinate each pair of neighboring sectors. If a robot has reached the exit-way of the current sector and is going to enter the entryway of the next sector, the robot should firstly apply for the entryway occupancy authority of the next sector, then the entryway will be reserved for the robot and can not be applied by other robots, until the entryway occupancy authority has been released.

\subsubsection{Strategies for resolving communication failures}
$\ $

The communication connection between robots and the sector planner/controller may be temporarily disabled due to bandwidth limitations, route handoff and re-establishment operators, or hardware malfunctions. By implementing the heart-single based method \cite{mysmc}, the state of each robot in the sector can be monitored by the sector controller and communication failures can be detected in real time.

We design a K-step redundancy mechanism to solve the communication failure problem: If a communication failure occurs, the failed robot will continue to follow its road-level path planned in the previous time step, while waiting the re-establishment of the communication connection. In order to ensure the safety, if the communication cannot be reconstructed immediately, the failed robot will move up to K steps and then stops to wait for the communication reconstruction. What's more, if the failed robot reaches the exit-way of its current sector, it will also stop to ensure safety. In the meantime, the sector planner/controller will estimate the maximum motion range (the K-step range) of the failed robot and close the corresponding area to prevent other robots from entering, and command the robots which are currently located in that area to stop. Since in the individual robot navigation level, each robot will detect collisions by its onboard sensors (such as range sensors or impact sensors), safety can be ensured. Finally, once the communication connection is reconstructed, the sector planner/controller will replan the road-level path for all robots.

The proposed K-step redundancy mechanism aims to reduce the impact of the communication failure on the traffic efficiency while ensuring safety. Since most of the communication failures in practical applications are temporary failures and can be recovered in a short time (during the K step movements of the failed robot), so the impact can be eliminated and the system performance can be maintained. %under the proposed K-step redundancy mechanism.

\section{Simulations and Experiments}
\begin{figure}[!h]
\centering
\includegraphics[width=0.9\columnwidth]{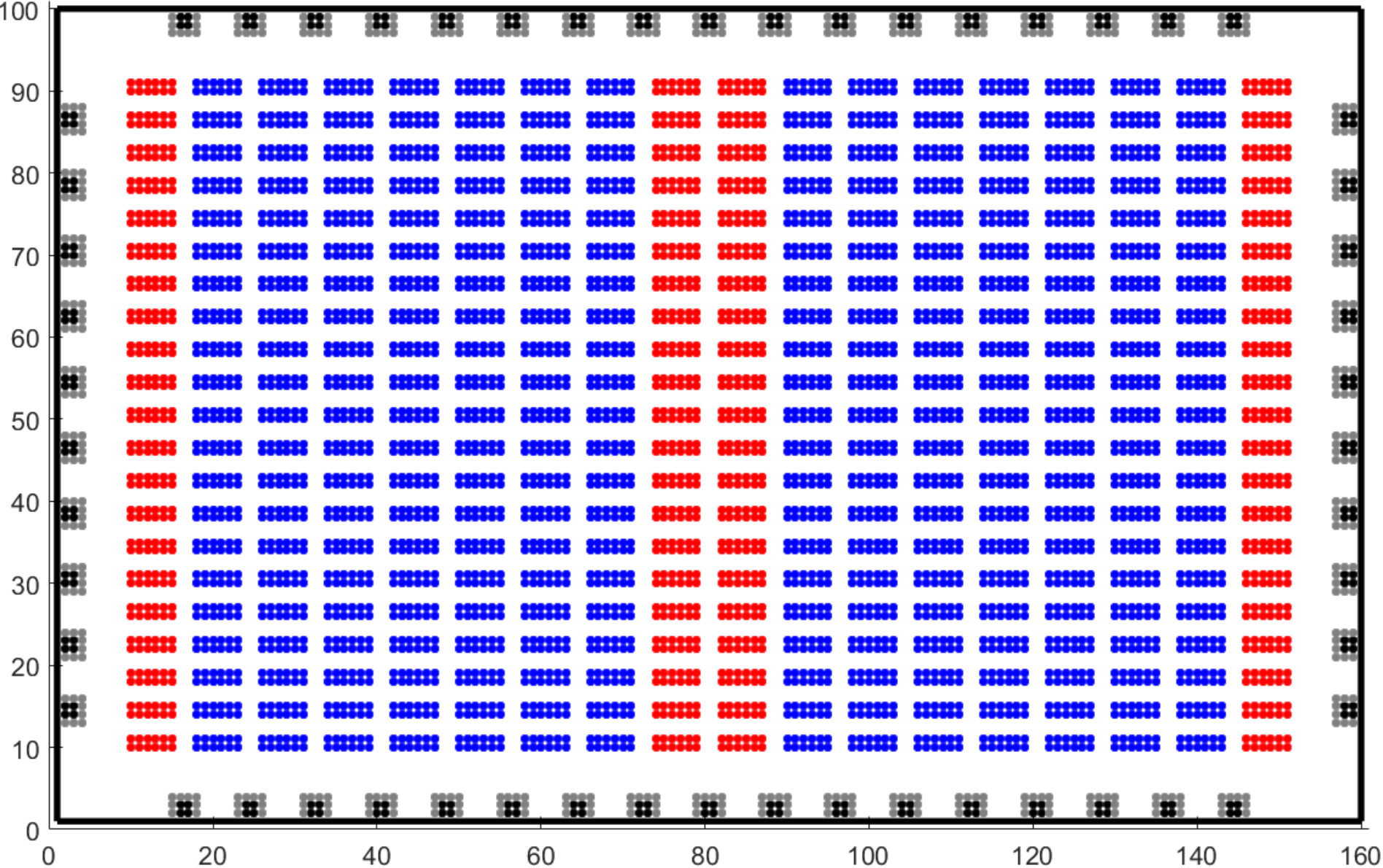}
\caption{Simulation environment. The blue blocks represent the task station areas (in each block, there are 12 nodes which represent the 12 task pickup stations in each task area), the red blocks represent the robot station areas (in each block, there are 12 nodes which represent the 12 robot stations in each robot station area) and the grey blocks represent the working station areas (in each block, there are 8 nodes which represent the 8 working stations in each working area). The size of one station is set to $1m\times 1m$, which represents the minimum space occupied by one robot which ensures safety.}
\label{figsimap}
\vspace{-0.2cm}
\end{figure}
Matlab simulations are conducted with Intel 3.4GHz Core i7-6700 CPU and 16G RAM. As shown in Fig. \ref{figsimap}, the map size is $160m\times100m$, there are 1008 robot stations, 3528 task stations and 432 working stations. According to Proposition 1, this environment can be used to conduct large-scale simulations with more than one thousand robots. In simulations, we consider 1008 robots and 3000 online tasks, and set $K=3$, $k_h=10$, $k_l=50$. Since there is no existing open algorithm which achieves large-scale simulations with thousands of robots in the presence of motion/communication uncertainties, we only provide simulations of the proposed approach in different task publishing frequencies and uncertainty levels. In each time step: 1) In order to simulate robot motion uncertainties, $f_m$ percent of robots are controlled to stay in their current positions and ignore the command from the controller/planner. 2) In order to simulate communication failures, one of the normal robots fails and the previous failed robots are recovered with a probability of $f_c$.
  
We first set $f_m=0.01$, $f_c=0.3$ and test the proposed approach with different task publishing frequencies. Simulations in each task frequency are repeated for 5 trials, the mean and standard deviation of each index are recorded in Fig. \ref{figs1}. The saturation phenomenon can be found in each index, which results from the maximum task capacity of the simulation environment (according to Proposition 1, the task capacity has an upper bound once the environment is defined). Then we set the task frequency to 5 (slightly above the estimated maximum task capacity) and test the proposed approach with different uncertainty levels. We set $f_m=0.01$, $f_c=0.3$ in level 1, $f_m=0.05$, $f_c=0.25$ in level 2, $f_m=0.02$, $f_c=0.2$ in level 3, $f_m=0.025$, $f_c=0.15$ in level 4 and $f_m=0.03$, $f_c=0.1$ in level 5. Results in Fig. \ref{figs1} and \ref{figs2} validate the effectiveness of the proposed approach in large-scale problems with large robot motion/communication uncertainties. The saturated computation time is nearly 0.7s and remains unchanged under different uncertainty levels. The task waiting time and finish time increase linearly and slightly with increasing uncertainty level. What's more, the heat-value of each sector changes slightly and remains in the acceptable value interval, these demonstrate that robot congestion is avoided and traffic flow equilibrium purpose is achieved, even in the presence of large robot motion/communication uncertainties.
\vspace{-0.15cm}
\begin{figure}[!h]
\centering
\includegraphics[width=0.95\columnwidth]{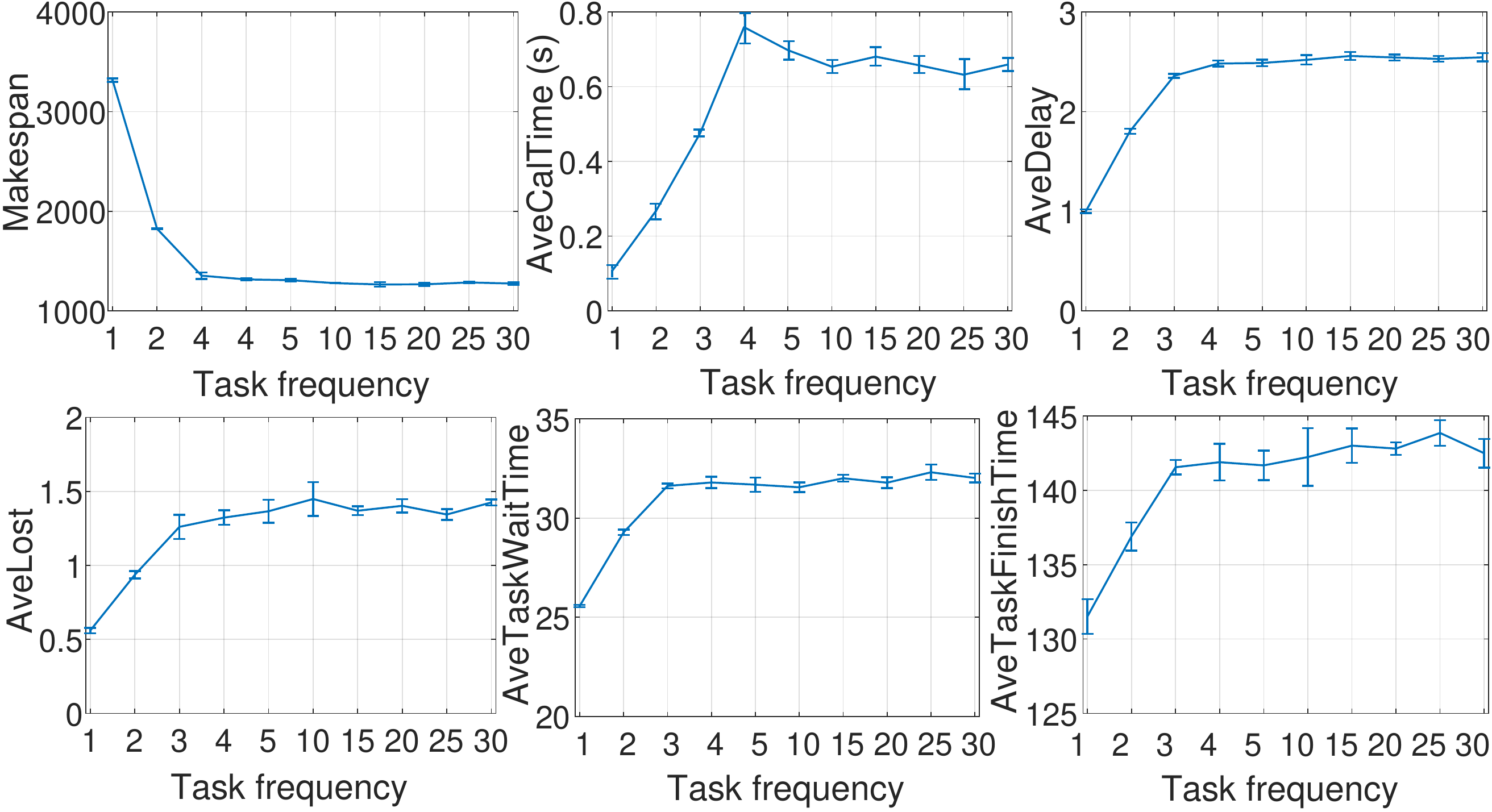}
\caption{Simulation results with 1008 robots and 3000 tasks: under different task frequencies. \emph{Task frequency} denotes the number of tasks published in each time step, \emph{Makespan} represents the time step required to finish all the 3000 tasks, \emph{AveCalTime} is the average computation time in each step, \emph{AveDelay} and \emph{AveLost} are the average number of robots with motion delays and communication failures in each step, \emph{AveTaskWaitingTime} and \emph{AveTaskFinishTime} represent the average step required from the publishing instant to the pickup instant and the completion instant of each task.}
\label{figs1}
\vspace{-0.22cm}
\end{figure}

\vspace{-0.25cm}
\begin{figure}[!h]
\centering
%\subfigure[case 1]{\includegraphics[width=0.65\columnwidth]{case1.pdf}}
%\hfil
\subfigure{\includegraphics[width=0.95\columnwidth]{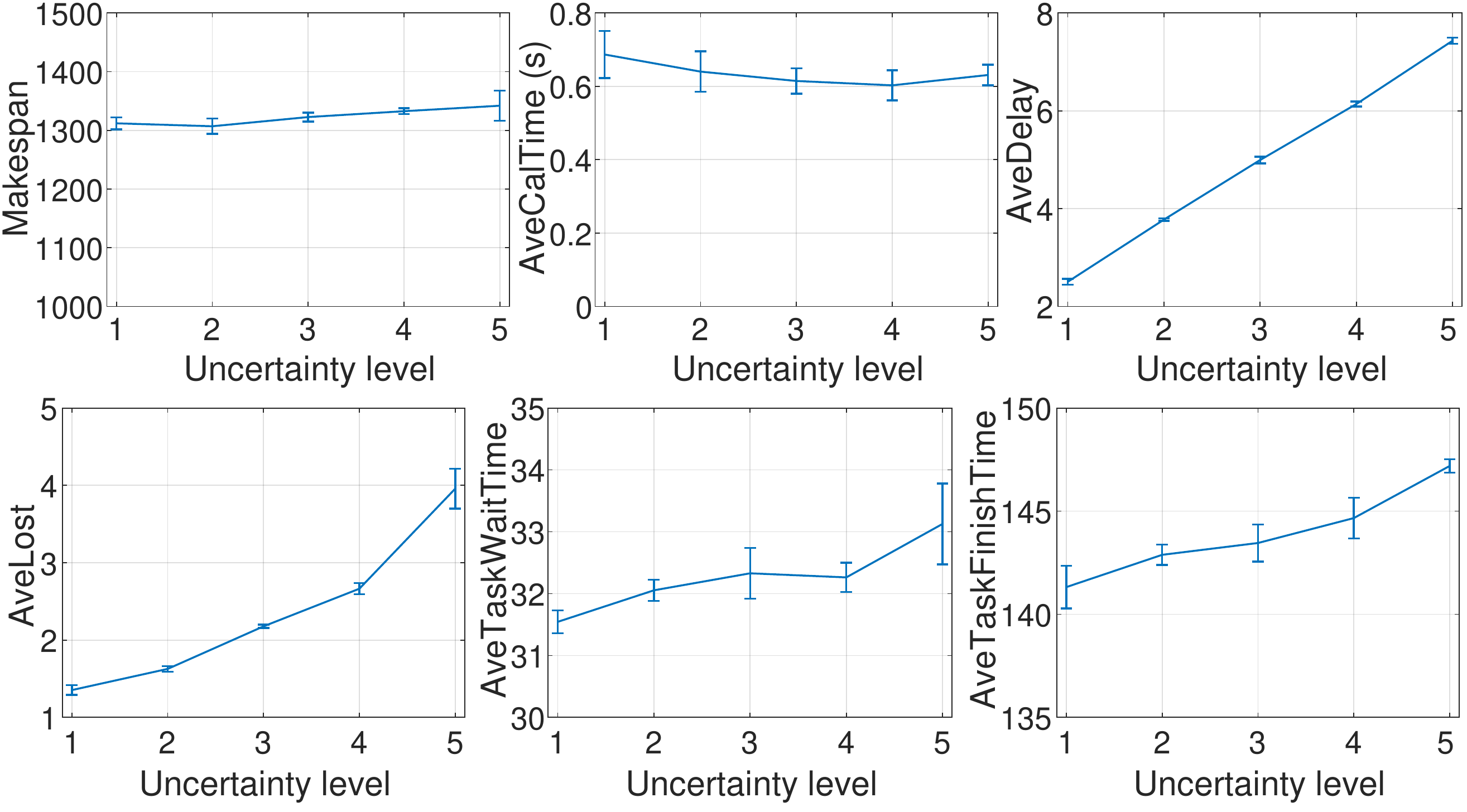}}
%\hfil
\subfigure{\includegraphics[width=0.9\columnwidth]{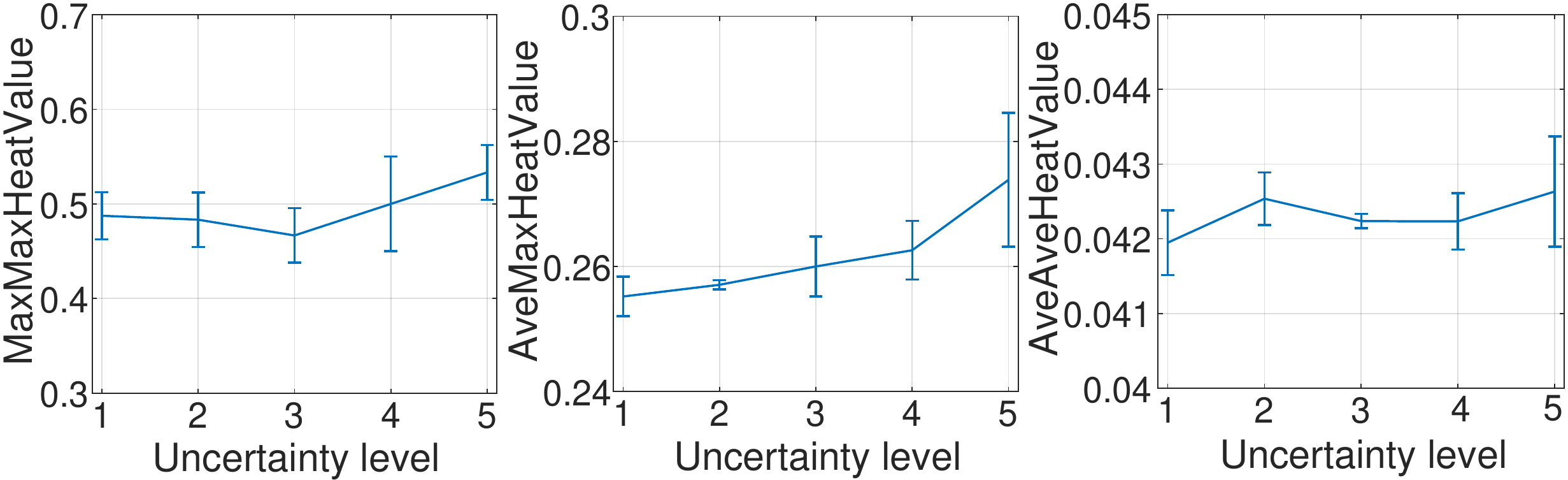}}
%\vspace{-0.3cm}
\caption{Simulation results with 1008 robots and 3000 tasks: under different motion and communication uncertainty levels. \emph{MaxMaxHeatValue} denotes the maximum heat-value of all sectors in all time steps, \emph{AveMaxHeatValue} represents the average of the maximum heat-value of all sectors in each time step, \emph{AveAveHeatValue} represents the average of the average heat-value of all sectors in each time steps. Other indices are the same as in Fig. \ref{figs1}.}
\label{figs2}
\vspace{-0.3cm}
\end{figure}

%\section{Experiments}

\begin{figure}[!h]
\centering
\includegraphics[width=0.7\columnwidth]{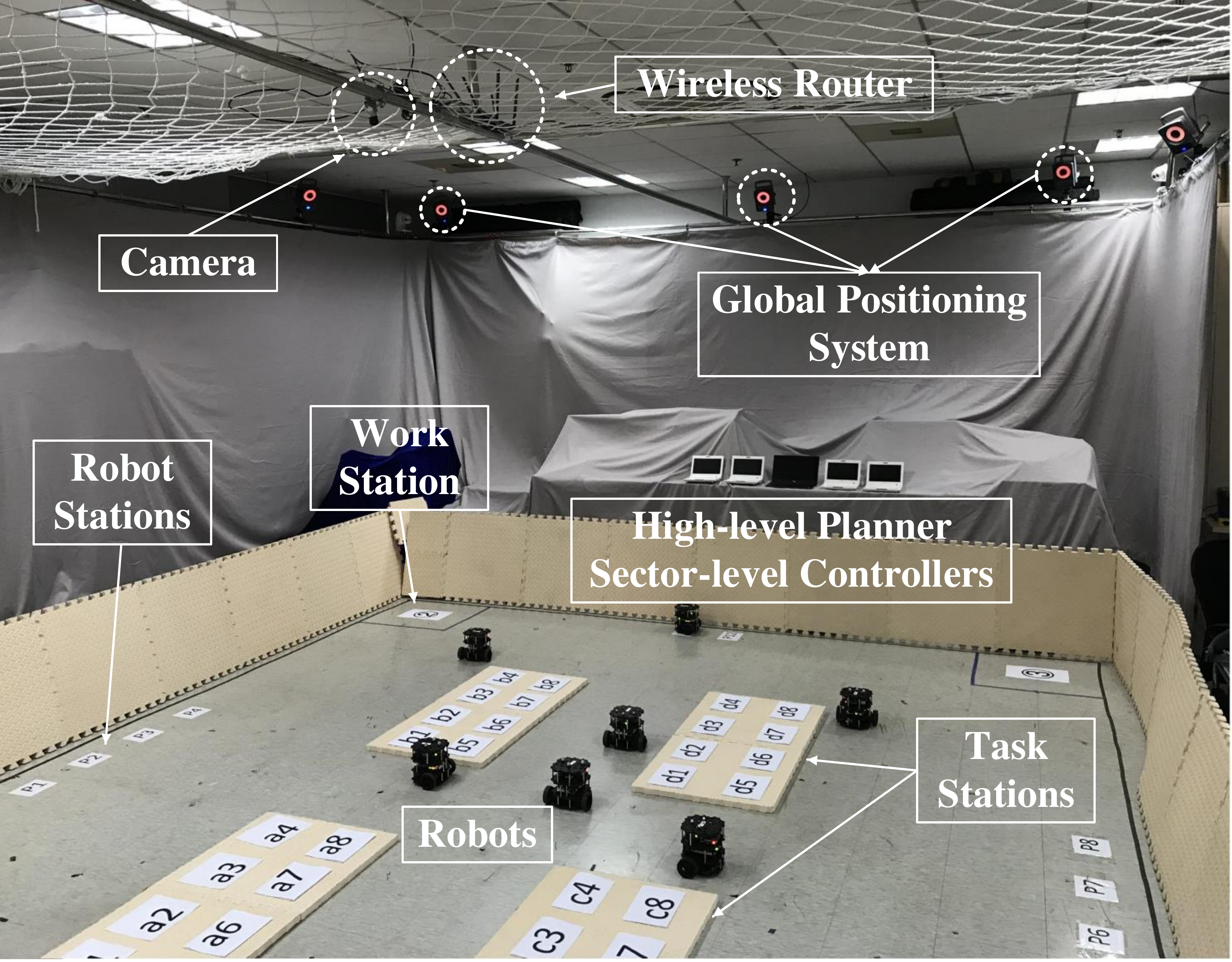}
\caption{Experiment system.}
\label{figexpsys}
\vspace{-0.2cm}
\end{figure}

We further conduct laboratory experiments on a group of seven mobile robots (TurtleBot-3 Burger). As shown in Fig. \ref{figexpsys}, a global positioning system is constructed to localize each robot and a wireless communication network is established to achieve communications between robots, the task planner and sector controllers. One laptop with Intel Core i7-6500U CPU 2.59GHz 8G RAM and four laptops with Intel Atom N270 CPU 1G RAM 1.6GHz are used to realize the centralized planning and decentralized coordination algorithms. Experiment environment is shown in Fig. \ref{figexpmap}. Note that temporary communication blocks and motion un-accuracies are inherently existed in the experiment system. In experiments, 12 tasks are generated online with random pickup stations and working stations, as shown in Tab. \ref{tabexp}, where $T_O^i$, $T_T^i$ and $T_W^i$ represent the publishing time, pickup station and working station of each task respectively. Initial robot positions are also generated randomly, where $R_1$ in $p_7$, $R_2$ in $p_{11}$, $R_3$ in $p_4$, $R_4$ in $p_{12}$, $R_5$ in $p_6$, $R_6$ in $p_5$, $R_7$ in $p_9$.

\begin{figure}[!h]
\centering
\includegraphics[width=0.65\columnwidth]{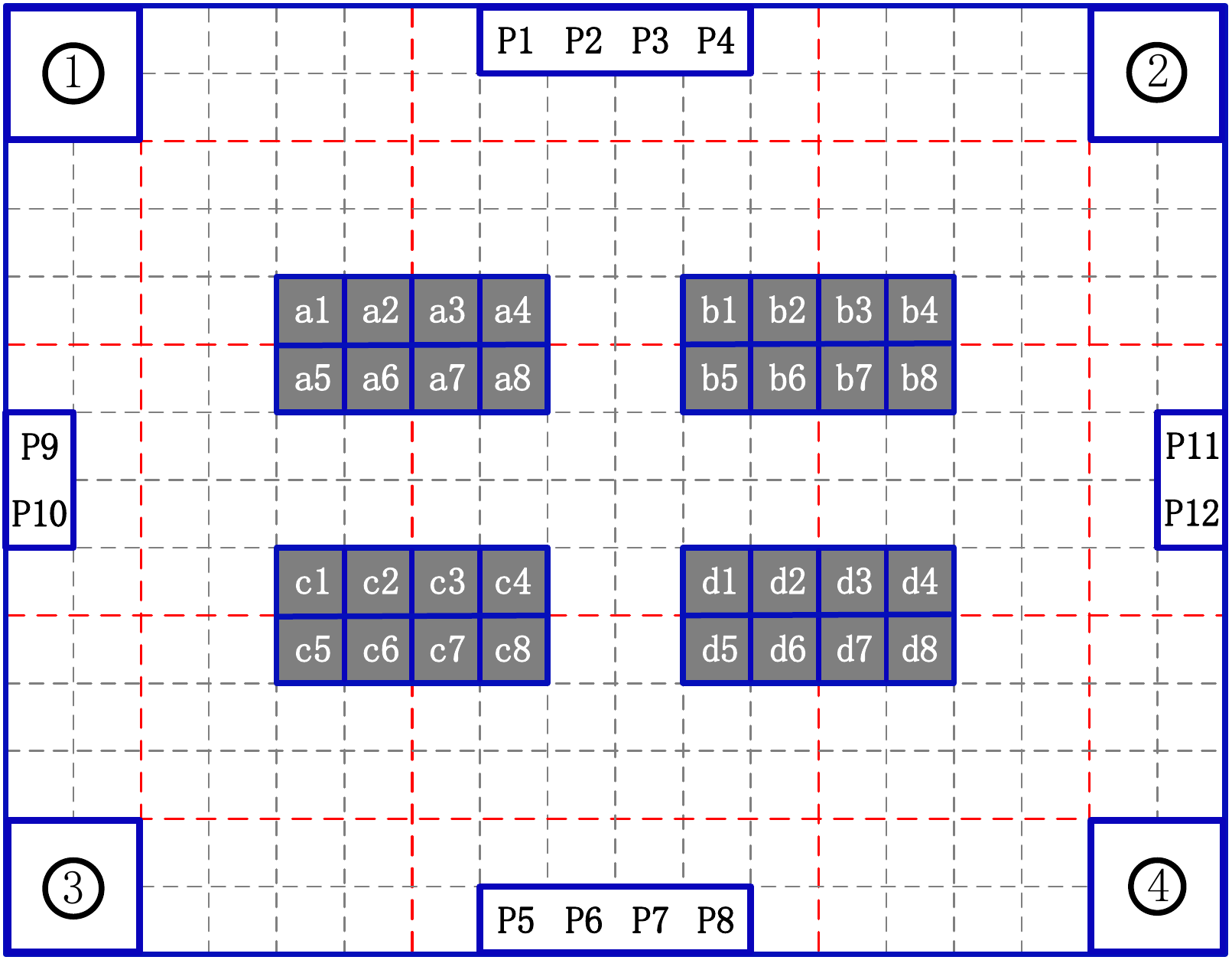}
\caption{Experiment environment. The size is $5.4m\times 4.2m$, with 32 task pickup stations (there are 4 task station areas, i.e., a-d, and in each area, there are 8 task stations), 12 robot stations (
$p_{1}$-$p_{12}$) and 4 working areas (in each area, there are 4 working stations). The size of one station is set to $0.3m\times0.3m$ with considering the robot size $0.14m\times 0.18m$. The environment is partitioned into several sectors as divided by the red lines.}
\label{figexpmap}
\vspace{-0.2cm}
\end{figure}

%\vspace{-0.2cm}
\begin{table}[!h]
%% increase table row spacing, adjust to taste
\renewcommand{\arraystretch}{1.0}
% if using array.sty, it might be a good idea to tweak the value of
% \extrarowheight as needed to properly center the text within the cells
\caption{Experiment Results}
\label{tabexp}
\vspace{-0.2cm}
\centering
%% Some packages, such as MDW tools, offer better commands for making tables
%% than the plain LaTeX2e tabular which is used here.
\begin{tabular}{cccccccc}
%\hline
%Sampling period & Upper bound of acceptable time delay\\
\hline
\hline
 & $T_O^i$ & $T_T^i$ & $T_W^i$ & $T_R^i$ & $T_A^i$ & $T_B^i$ & $T_F^i$\\
\hline
$T_1$ & $2s$ & $d_{8}$ & $4$ & $R_5$ & $2s$ & $12s$ & $24s$\\
%\hline
$T_2$ & $2s$ & $a_7$ & $2$ & $R_7$ & $2s$ & $8s$ & $37s$\\
%\hline
$T_3$ & $2s$ & $c_2$ & $3$ & $R_1$ & $2s$ & $18s$ & $35s$\\
%\hline
$T_4$ & $2s$ & $a_2$ & $1$ & $R_3$ & $2s$ & $17s$ & $27s$\\
%\hline
$T_5$ & $2s$ & $d_3$ & $1$ & $R_4$ & $2s$ & $7s$ & $40s$\\
%\hline
$T_6$ & $8s$ & $b_{4}$ & $4$ & $R_2$ & $8s$ & $21s$ & $59s$\\
%\hline
$T_7$ & $8s$ & $d_{6}$ & $4$ & $R_6$ & $8s$ & $21s$ & $35s$\\
%\hline
$T_8$ & $14s$ & $b_{8}$ & $2$ & $R_5$ & $24s$ & $42s$ & $57s$\\
%\hline
$T_9$ & $14s$ & $a_{5}$ & $1$ & $R_3$ & $27s$ & $43s$ & $64s$\\
%\hline
$T_{10}$ & $20s$ & $a_{5}$ & $3$ & $R_1$ & $35s$ & $52s$ & $81s$\\
%\hline
$T_{11}$ & $20s$ & $b_{2}$ & $2$ & $R_6$ & $35s$ & $62s$ & $83s$\\
%\hline
$T_{12}$ & $26s$ & $a_{2}$ & $1$ & $R_7$ & $37s$ & $64s$ & $80s$\\
\hline
\hline
\end{tabular}
\vspace{-0.2cm}
\end{table}

%Then the proposed approach is implemented to assign transportation tasks $T_1$-$T_{12}$ to Robots $R_{1}$-$R_{7}$, plan the corresponding sector-level pathes and coordinate each robot during the movements. 
Experiment results are also shown in Table \ref{tabexp}, where $T_R^i$ denotes the robot which is assigned to deliver the task, $T_A^i$, $T_B^i$ and $T_F^i$ represents the real time instants in the experiment when the task is assigned to a robot, picked up and accomplished by the robot, respectively. Results show that each task is immediately assigned to the nearest free robot once published or to the earliest free robot once it has finished its previous task, thus increasing the working efficiency and reducing the transportation cost. Fig. \ref{figexpresult} shows the real robot trajectories recorded by the global positioning system. We can find that robots can track their road-level paths with an acceptable navigation accuracy. Fig. \ref{figexpscan} shows two consecutive scenarios as an example to demonstrate the motion coordination performance of the proposed approach. More details can be found in the proposed video attachment.

\begin{figure}[!h]
\centering
\includegraphics[width=0.65\columnwidth]{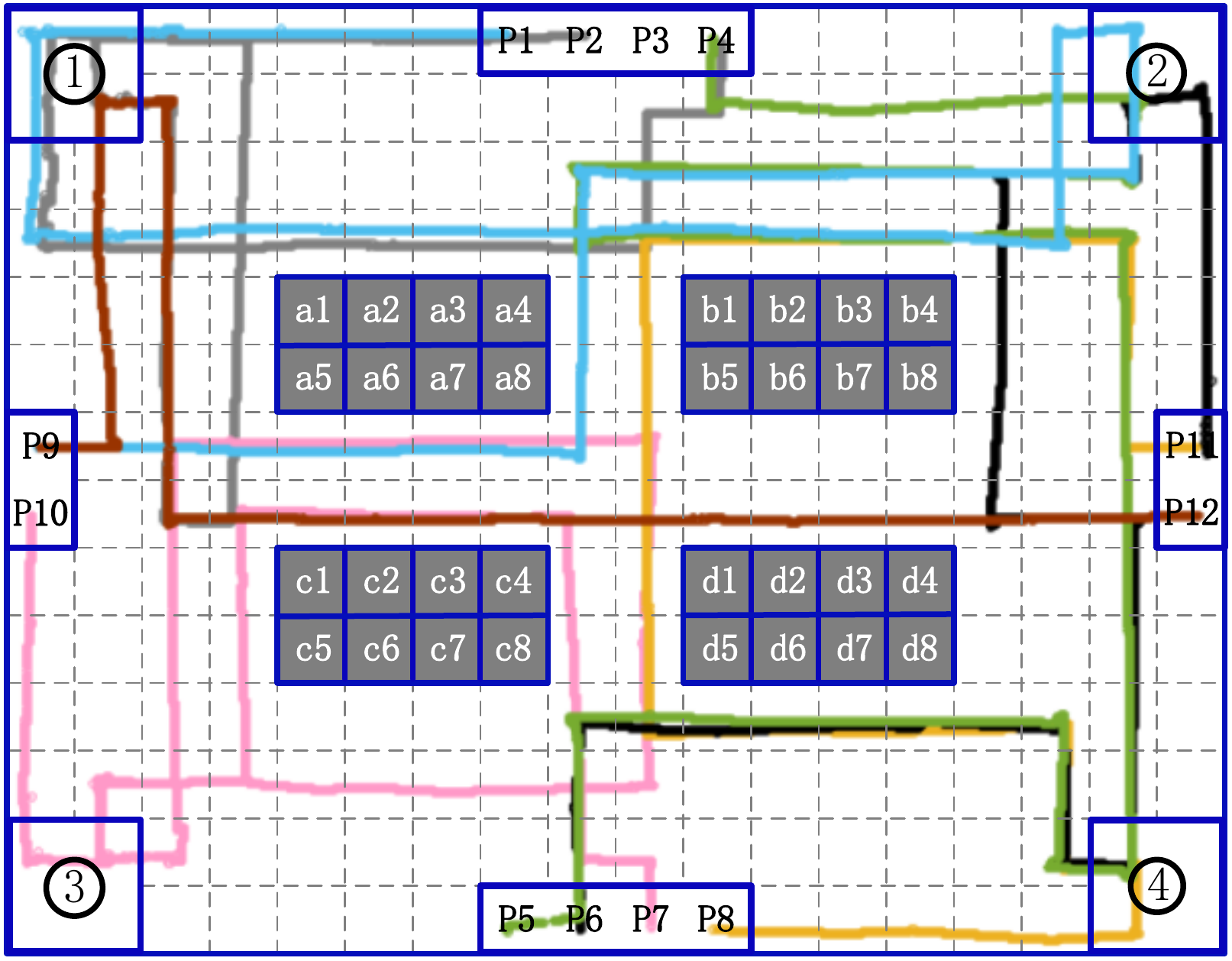}
\caption{Real robot trajectories recorded by the global positioning system.}
\label{figexpresult}
\vspace{-0.2cm}
\end{figure}

\begin{figure}[!h]
\centering
\includegraphics[width=0.75\columnwidth]{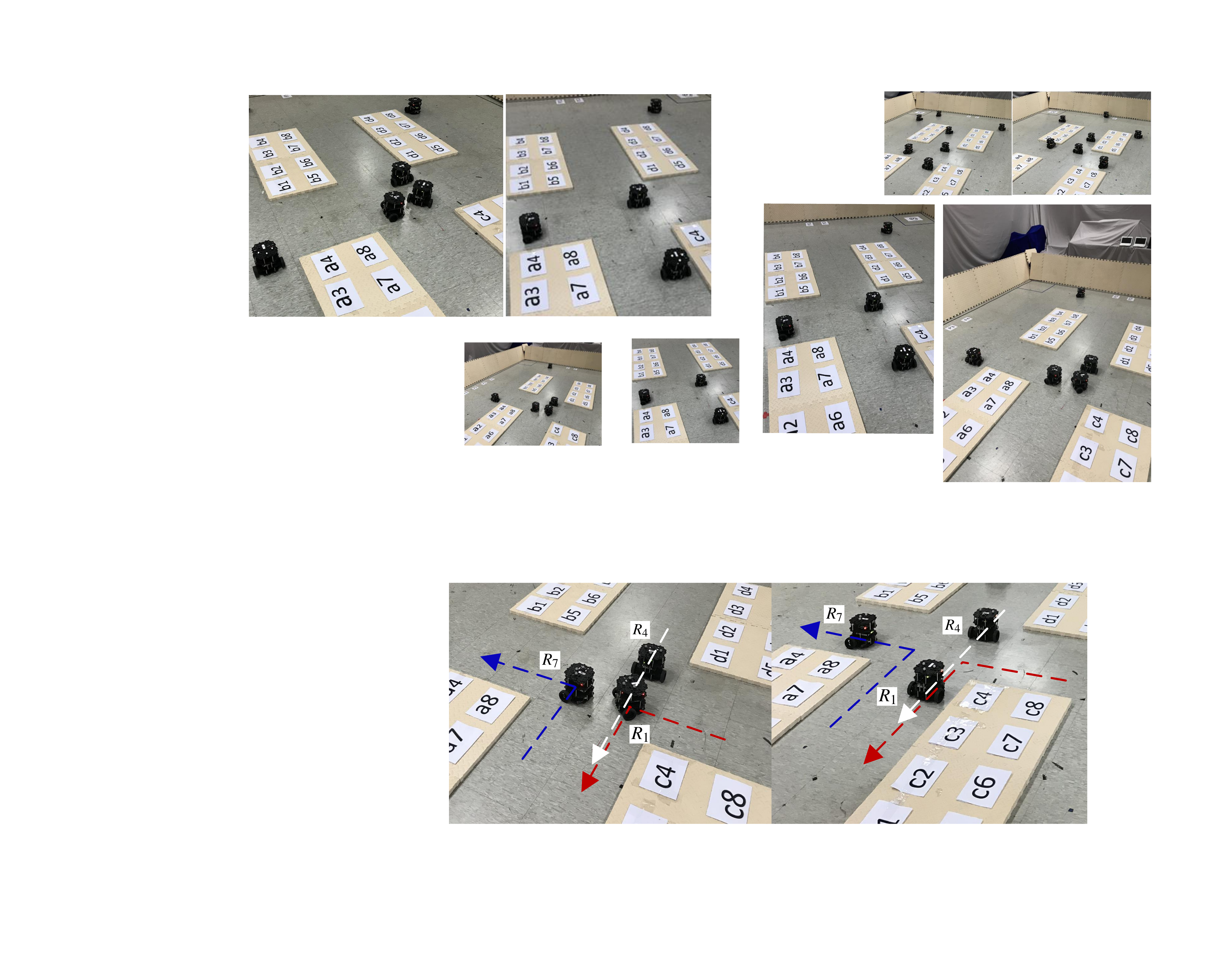}
\caption{Experiment scenarios in $t=14s$ (left) and $t=17s$ (right), where the motions of three robots are coordinated in an intersection to avoid collisions.}
\label{figexpscan}
\vspace{-0.2cm}
\end{figure}

\section{Conclusion}
In this paper, we propose a hierarchical approach to solve the life-long task planning and motion coordination problem with large-scale robot networks. Aiming for practical applications, motion uncertainties and communication failures are taken into account in system design, and the traffic flow equilibrium purpose is considered in order to ensure the working efficiency. Simulations with more than one thousand robots validate the effectiveness, scalability and robustness of the proposed approach in large-scale problems. Experiments demonstrate the applicability of the proposed approach in practical applications. In further work, we will %focus on the hierarchical approach with multi-layer coordinations and 
implement the proposed approach in practical robotic warehouses.


\begin{thebibliography}{99}

\bibitem{Guizzo2008}
E.~Guizzo, ``Three engineers, hundreds of robots, one warehouse,'' \emph{IEEE Spectrum}, vol.~45, no.~7, pp.~26--34, 2008.

\bibitem{myits}
Z.~Liu, H.~Wang, W.~Chen, J.~Yu, and J.~Chen, ``An incidental delivery based method for resolving multirobot pair-wised transportation problems,'' \emph{IEEE Trans. Intell. Transp. Syst.}, vol.~17, no.~7, pp.~1852--1866, 2016.

\bibitem{Coltin2014}
B.~Coltin and M.~Veloso, ``Online pickup and delivery planning with transfers for mobile robots,'' in \emph{Proc. IEEE Int. Conf. Robot. Autom.}, 2014, pp.~5786--5791.

\bibitem{Digani2015}
V.~Digani, L.~Sabattini, C.~Secchi, and C.~Fantuzzi, ``Ensemble coordination approach in multi-AGV systems applied to industrial warehouses,'' \emph{IEEE Trans. Autom. Sci. Eng.}, vol.~12, no.~3, pp.922--934, 2015.

\bibitem{Wurman2008}
P.~R.~Wurman, R.~D'Andrea, and M.~Mountz, ``Coordinating hundreds of cooperative, autonomous vehicles in warehouses,'' \emph{AI Magazine}, vol.~29, no.~1, pp.~9--19, 2008.

\bibitem{Yu2016}
J.~Yu and S.~M.~LaValle, ``Optimal multirobot path planning on graphs: complete algorithms and effective heuristics,'' \emph{IEEE Trans. Robot.}, vol.~32, no.~5, pp.~1163--1177, 2016.

\bibitem{Ma2016}
H.~Ma, C.~Tovey, G.~Sharon, T.~K.~S.~Kumar, and S.~Koenig, ``Multi-agent path finding with payload transfers and the package-exchange robot-routing problem,'' in \emph{Proc. AAAI Conf. Artificial Intelligence}, 2016, pp.~3166--3173.

\bibitem{Roozbehani2011}
Z.~Bnaya and A.~Felner, ``Conflict-oriented windowed hierarchical cooperative A*,'' in \emph{Proc. IEEE Int. Conf. Robot. Autom.}, 2014, pp.~3743--3748.

\bibitem{Sharona2015}
G.~Sharona, R.~Sterna, A.~Felnera, N.~R.~Sturtevant, ``Conflict-based search for optimal multi-agent pathfinding,'' \emph{Artificial Intelligence}, vol.~219, pp.~40--66, 2015.

\bibitem{Ma2017-2}
H.~Ma, T.~K.~S.~Kumar, and S.~Koenig, ``Multi-agent path finding with delay probabilities,'' in \emph{Proc. AAAI Conf. Artificial Intelligence}, 2017, pp.~3605¨C-3612.

\bibitem{Regele2006}
R.~Regele and P.~Levi, ``Cooperative multi-robot path planning by heuristic priority adjustment,'' in \emph{Proc. IEEE/RSJ Int. Conf. Intell. Robot. Syst.}, 2006, pp. 5954--5959.

\bibitem{Berg2011}
J.~van~den~Berg, S.~J.~Guy, M.~Lin, and D.~Manocha, ``Reciprocal n-body collision avoidance,'' \emph{Robotics Research}, Springer, 2011.

\bibitem{Smolic-Rocak2010}
N.~Smolic-Rocak, S.~Bogdan, Z.~Kovacic, and T.~Petrovic, ``Time windows based dynamic routing in multi-AGV systems,'' \emph{IEEE Trans. Autom. Sci. Eng.}, vol.~7, no.~1, pp.~151--155, 2010.

\bibitem{Cap2015}
M.~\v{C}\'{a}p, P.~Nov\'{a}k, A.~Kleiner, and M.~Seleck\'{y}, ``Prioritized planning algorithms for trajectory coordination of multiple mobile robots,'' \emph{IEEE Trans. Autom. Sci. Eng.}, vol.~12, no.~3, pp.~835--849, 2015.

\bibitem{Roozbehani2011}
H.~Roozbehani and R.~D'Andrea, ``Adaptive highways on a grid,'' \emph{Robotics Research}, Springer, 2011.

\bibitem{Ma2017}
H.~Ma, J.~Li, T.~K.~S.~Kumar, and S.~Koenig, ``Lifelong multi-agent path finding for online pickup and delivery tasks,'' in \emph{Proc. 16th Int. Conf. Autonomous Agents and Multiagent Systems}, 2017.

\bibitem{mysmc}
Z.~Liu, W.~Chen, H.~Wang, Y.-H.~Liu, Y.~Shen and X.~Fu, ``A self-repairing algorithm with optimal repair path for maintaining motion synchronization of mobile robot network,'' \emph{IEEE Trans. Syst., Man, and Cybern., Syst.}, Early Access, 2018.



%\bibitem{Neumann2016}
%M.~A.~Neumann and C.~A.~Kitts, ``A hybrid multirobot control architecture for object transport,'' \emph{IEEE/ASME Trans. Mechatron.}, vol.~21, no.~6, pp.~2983--2988, 2016.
%
%\bibitem{zhang2013}
%Q.~Zhang, L.~Lapierre, and X.~Xiang, ``Distributed control of coordinated path tracking for networked nonholonomic mobile vehicles,'' \emph{IEEE Trans. Ind. Inf.}, vol.~9, no.~1, pp.~472--484, 2013.
%
%\bibitem{sun2009synchronization}
%D.~Sun, C.~Wang, W.~Shang, and G.~Feng, ``A synchronization approach to trajectory tracking of multiple mobile robots while maintaining time-varying formations,'' \emph{IEEE Trans. Robot.}, vol.~25, no.~5, pp.~1074--1086, 2009.
%
%
%\bibitem{Antonelli2009}
%G.~Antonelli, F.~Arrichiello, and S.~Chiaverini, ``Experiments of formation control with multirobot systems using the null-space-based behavioral control,'' \emph{IEEE Trans. Control Syst. Technol.}, vol.~17, no.~5, pp.~1173--1182, 2009.
%
%
%\bibitem{Panagou2014}
%D.~Panagou and V.~Kumar, ``Cooperative visibility maintenance for leader-follower formations in obstacle environments,'' \emph{IEEE Trans. Robot.}, vol.~30, no.~4, pp.~831¨C-844, 2014.



%\bibitem{mycst}
%Z.~Liu, W.~Chen, J.~Lu, H.~Wang and J.~Wang, ``Formation control of mobile robots using distributed controller with sampled-data and communication delays,'' \emph{IEEE Trans. Control Syst. Technol.}, vol.~24, no.~6, pp.~2125--2132, 2016.

%\bibitem{Arrichiello2015}
%F. Arrichiello, A. Marino and F. Pierri, ``Observer-based decentralized fault detection and isolation strategy for networked multirobot systems,'' \emph{IEEE Trans. Control Syst. Technol.}, vol. 23, no. 4, pp. 1465--1476, 2015.

%\bibitem{Sadeghzadeh2016}
%N.~Sadeghzadeh-Nokhodberiz and J. Poshtan, ``Distributed interacting multiple filters for fault diagnosis of navigation sensors in a robotic system,'' \emph{IEEE Trans. Syst., Man, Cybern., Syst.}, available online and to be published, 2016.

%\bibitem{Wang2009}
%C.~Wang, W.~Shang, and D.~Sun, ``A neural network approach to monitroing robot malfunction in multi-robot formation control tasks,'' in \emph{Proc. IEEE Int. Conf. Mechatron. Autom.}, Changchun, China, 2009, pp. 2689--2694.
%
%\bibitem{myiros}
%Z.~Liu, J.~Ju, W.~Chen, X.~Fu, and H.~Wang, ``A gradient-based self-healing algorithm for mobile robot formation,'' in \emph{Proc. IEEE/RSJ Int. Conf. Intell. Robot. Syst.}, Hamburg, Germany, 2015, pp. 3395--3400.
%
%\bibitem{Rubenstein2009}
%M.~Rubenstein and W.~M.~Shen, ``Scalable self-assembly and self-repair in a collective of robots,'' in \emph{Proc. IEEE/RSJ Int. Conf. Intell. Robot. Syst.}, St.~Louis, USA, 2009, pp. 1484--1489.
%
%\bibitem{Derbakova2011}
%A.~Derbakova, N.~Correll, and D.~Rus, ``Decentralized self-repair to maintain connectivity and coverage in networked multi-robot systems,'' in \emph{Proc. IEEE Int. Conf. Robot. Autom.}, Shanghai, China, 2011, pp. 3863--3868.
%
%\bibitem{Rubenstein2014}
%M.~Rubenstein, A.~Cornejo, and R.~Nagpal, ``Programmable self-assembly in a thousand-robot swarm,'' \emph{Science}, vol. 345, no. 6189, pp. 795--799, 2014.
%
%\bibitem{Hsieh2006}
%M.~Y.~Hsieh and V.~Kumar, ``Pattern generation with multiple robots,'' in \emph{Proc. IEEE Int. Conf. Robot. Autom.}, Orlando, USA, 2006, pp. 2442--2447.
%
%\bibitem{Olfati-Saber2004}
%R.~Olfati-Saber and R.~M.~Murray, ``Consensus problems in networks of agents with switching topology and time-delays,'' \emph{IEEE Trans. Autom. Contr.}, vol.~49, no.~9, pp.~1520--1533, 2004.
%
%\bibitem{Wang2002}
%X.~F.~Wang and G.~Chen, ``Synchronization in scale-free dynamical networks: robustness and fragility,'' \emph{IEEE Trans. on Circ. and Syst. I, Reg. Papers}, vol.~49, no.~1, pp. 54--62, 2002.





%\bibitem{6360016}
%F.~Morbidi and G.L. Mariottini, ``Active target tracking and cooperative localization for teams of aerial vehicles,'' \emph{IEEE Trans. Contr. Syst. Technol.}, vol.~21, no.~5, pp.~1694--1707, 2013.

%\bibitem{Kim2014}
%C.-Y.~Kim, D.~Z.~Song, Y.~L.~Xu, J.~G.~Yi and X.~Y.~Wu, ``Cooperative search of multiple unknown transient radio sources using multiple paired mobile robots,'' \emph{IEEE Trans. Robot.}, vol. 30, no. 5, pp. 1161--1173, 2014.






%\bibitem{lin2004}
%Z.~Lin, M.~Broucke, and B.~Francis, ``Local control strategies for groups of mobile autonomous agents,'' \emph{IEEE Trans. Autom. Contr.}, vol.~49, no.~4, pp.~622--629, 2004.
%
%\bibitem{ren2007}
%W.~Ren, R.~W.~Beard, and E.~Atkins, ``Information consensus in multivehicle cooperative control: collective group behavior through local interaction,'' \emph{IEEE Contr. Syst. Mag.}, vol.~27, no.~2, pp.~71--82, 2007.
%
%\bibitem{yu2008}
%W.~Yu, J.~Cao, and J.~L$\ddot{\rm u}$, ``Global synchronization of linearly hybrid coupled networks with time-varying delay ,'' \emph{SIAM J. Appl. Dyn.}, vol.~7, no.~1, pp.~108--133, 2008.

%\bibitem{zhao2012finitetime}
%D.~Y. Zhao and T.~Zou, ``A finite-time approach to formation control of multiple mobile robots with terminal sliding mode,'' \emph{Int. J. Syst. Sci.}, vol.~43, no.~11, pp.~1998--2014, 2012.



%\bibitem{shan2008synflight}
%J.~Shan.
%\newblock Six-degree-of-freedom synchronised adaptive learning control for
%  spacecraft formation flying.
%\newblock {\em IET Control Theory \& Applications}, 2(10):930--949, 2008.








%\bibitem{wu1995}
%C.~Wu and L.~O.~Chua, ``Synchronization in an array of linearly coupled dynamical systems,'' \emph{IEEE Trans. Circuits Syst. I, Fundam. Theory Appl.}, vol.~42, pp.~430--447, 1995.

%\bibitem{gu2003stability}
%K.~Q. Gu, J.~Chen, and V.~L. Kharitonov, \emph{Stability of time-delay systems}, Springer, 2003.
%
%\bibitem{kolmanovskii1999}
%V.~B. Kolmanovskii, and A.~D.~Myshkis, \emph{Introduction to the theory and applications of functional differential equations}, Norwell, MA:~Kluwer, 1999.

%\bibitem{fox2000exploration}
%D.~Fox, W.~Burgard, H.~Kruppa, and S.~Thrun.
%\newblock A probabilistic approach to collaborative multi-robot localization.
%\newblock {\em Autonomous robots}, 8(3):325--344, 2000.



%\bibitem{6045299}
%N.~Nigam, S.~Bieniawski, I.~Kroo, and J.~Vian.
%\newblock Control of multiple uavs for persistent surveillance: Algorithm and
%  flight test results.
%\newblock {\em IEEE Transactions on Control Systems Technology},
%  20(5):1236--1251, 2012.

%\bibitem{jiang2013flightcontrol}
%Q.~M. Jiang and V.~Kumar.
%\newblock The inverse kinematics of cooperative transport with multiple aerial
%  robots.
%\newblock {\em IEEE Transactions on Robotics}, 29(1):136--145, 2013.

%\bibitem{sun2009synchronization}
%D.~Sun, C.~Wang, W.~Shang, and G.~Feng.
%\newblock A synchronization approach to trajectory tracking of multiple mobile
%  robots while maintaining time-varying formations.
%\newblock {\em IEEE Transactions on Robotics}, 25(5):1074--1086, 2009.

%\bibitem{balch1998reconnaissance}
%T.~Balch and R.~C. Arkin.
%\newblock Behavior-based formation control for multirobot teams.
%\newblock {\em IEEE Transactions on Robotics and Automation}, 14(6):926--939,
%  1998.
%
%\bibitem{antonelli2009behavior}
%G.~Antonelli, E.~Arrichiello, and S.~Chiaverini.
%\newblock Experiments of formation control with multirobot systems using the
%  null-space-based behavioral control.
%\newblock {\em IEEE Transactions on Control Systems Technology},
%  17(5):1173--1182, 2009.
%
%\bibitem{lewis1997virtualstructure}
%M.~A. Lewis and K.~H. Tan.
%\newblock High precision formation control of mobile robots using virtual
%  structures.
%\newblock {\em Autonomous Robots}, 4(4):387--403, 1997.
%
%\bibitem{beard2001virtualstructure}
%R.~W. Beard, J.~Lawton, F.~Y. Hadaegh, et~al.
%\newblock A coordination architecture for spacecraft formation control.
%\newblock {\em IEEE Transactions on Control Systems Technology}, 9(6):777--790,
%  2001.
%
%\bibitem{leonard2004}
%P.~$\ddot{\rm O}$gren, E.~Fiorelli, and N.~E. Leonard.
%\newblock Cooperative control of mobile sensor networks: Adaptive gradient
%  climbing in a distributed environment.
%\newblock {\em IEEE Transactions on Automatic Control}, 49(8):1292--1302, 2004.

%\Bibitem{Das2002Leaderfollower}
%A.~K. Das, R.~Fierro, V.~Kumar, J.~P. Ostrowski, J.~Spletzer, And C.~J. Taylor.
%\Newblock A Vision-Based Formation Control Framework.
%\Newblock {\Em Ieee Transactions On Robotics And Automation}, 18(5):813--825,
%  2002.
%
%\Bibitem{Hedrick2002}
%A.~Pant, P.~Seiler, And K.~Hedrick.
%\Newblock Mesh Stability Of Look-Ahead Interconnected Systems.
%\Newblock {\Em Ieee Transactions On Automatic Control}, 47(2):403--407, 2002.
%
%\Bibitem{Ailon2012Leaderfollower}
%A.~Ailon And I.~Zohar.
%\Newblock Control Strategies For Driving A Group Of Nonholonomic Kinematic
%  Mobile Robots In Formation Along A Time-Parameterized Path.
%\Newblock {\Em Ieee/Asme Transactions On Mechatronics}, 17(2):326--336, 2012.
%
%\Bibitem{Takahashi2004Review}
%H.~Takahashi, H.~Nishi, And K.~Ohnishi.
%\Newblock Autonomous Decentralized Control For Formation Of Multiple Mobile
%  Robots Considering Ability Of Robot.
%\Newblock {\Em Ieee Transactions On Industrial Electronics}, 51(6):1272--1279,
%  2004.

%\bibitem{shan2008synflight}
%J.~Shan.
%\newblock Six-degree-of-freedom synchronised adaptive learning control for
%  spacecraft formation flying.
%\newblock {\em IET Control Theory \& Applications}, 2(10):930--949, 2008.
%
%\bibitem{sun2006ijrrparallelmanipulators}
%D.~Sun, R.~Lu, J.~K. Mills, and C.~Wang.
%\newblock Synchronous tracking control of parallel manipulators using
%  cross-coupling approach.
%\newblock {\em The International Journal of Robotics Research},
%  25(11):1137--1147, 2006.
%
%\bibitem{zhao2012finitetime}
%D.~Y. Zhao and T.~Zou.
%\newblock A finite-time approach to formation control of multiple mobile robots
%  with terminal sliding mode.
%\newblock {\em International Journal of Systems Science}, 43(11):1998--2014,
%  2012.

%\bibitem{li2007transportationvehicle}
%Y.~H. Li, L.~M. Yang, and G.~L. Yang.
%\newblock Network-based coordinated motion control of large-scale
%  transportation vehicles.
%\newblock {\em IEEE/ASME Transactions on Mechatronics}, 12(2):208--215, 2007.

%\bibitem{chung2009concurrentsyn}
%S.~J. Chung and J.~J. Slotine.
%\newblock Cooperative robot control and concurrent synchronization of
%  lagrangian systems.
%\newblock {\em IEEE Transactions on Robotics}, 25(3):686--700, 2009.

%\bibitem{lu2008}
%J.~G. Lu and D.~J. Hill.
%\newblock Global asymptotical synchronization of chaotic lur'e systems using
%  sampled data: a linear matrix inequality approach.
%\newblock {\em IEEE Transactions on Circuits and Systems II: Express Briefs},
%  55(6):586--590, 2008.



%\bibitem{hsieh2006pattern}
%M.~A. Hsieh and V.~Kumar.
%\newblock Pattern generation with multiple robots.
%\newblock In {\em Proceedings of IEEE International Conference on Robotics and
%  Automation}, pages 2442--2447. IEEE, 2006.

%\bibitem{sun2010synchronizationbook}
%D.~Sun.
%\newblock {\em Synchronization and control of multiagent systems}.
%\newblock CRC Press, 2010.

%\bibitem{koren1980crosscoupling}
%Y.~Koren.
%\newblock Cross-coupled biaxial computer control for manufacturing systems.
%\newblock {\em Journal of Dynamic Systems, Measurement, and Control},
%  102(4):265--272, 1980.

%\bibitem{myribio}
%Z.~Liu, W.~D. Chen, J.~C. Wang, and H.~S. Wang.
%\newblock Action selection for active and cooperative global localization based
%  on localizability estimation.
%\newblock In {\em Proceedings of IEEE International Conference on Robotics and
%  Biomimetics}, pages 1012--1018. IEEE, 2014.

%\bibitem{fei2009partition}
%Z.~Y. Fei, H.~J. Gao, and W.~X. Zheng.
%\newblock New synchronization stability of complex networks with an interval
%  time-varying coupling delay.
%\newblock {\em IEEE Transactions on Circuits and Systems II: Express Briefs},
%  56(6):499--503, 2009.

%\bibitem{gu2003stability}
%K.~Q. Gu, J.~Chen, and V.~L. Kharitonov.
%\newblock {\em Stability of time-delay systems}.
%\newblock Springer, 2003.

%\bibitem{chen2006}
%W.~H. Chen, X.~M. Lu, Z.~H. Guan, and W.~X. Zheng.
%\newblock Delay-dependent exponential stability of neural networks with
%  variable delay: an {LMI} approach.
%\newblock {\em IEEE Transactions on Circuits and Systems II: Express Briefs,},
%  53(9):837--842, 2006.

%\bibitem{zhang2013}
%Q.~Zhang, L.~Lapierre, and X.~Xiang.
%\newblock Distributed control of coordinated path tracking for networked
%  nonholonomic mobile vehicles.
%\newblock {\em IEEE Transactions on Industrial Informatics}, 9(1):472--484,
%  2013.






%
%\bibitem{Burgard2005}
%W.~Burgard, M.~Moors, C.~Stachniss, and F.~Schneider, ``Coordinated multirobot exploration,'' \emph{IEEE Trans. Robot.}, vol.~21, no.~3, pp.~376--286, 2005.
%
%\bibitem{Chen2011}
%J. Chen and D. Sun, ``Resource constrained multirobot task allocation based on leader-follower coalition methodology,'' \emph{Int. J. Robot. Res.}, vol. 30, no. 12, pp. 1423--1434, 2011.
%
%\bibitem{Fox2006}
%D.~Fox, J.~Ko, K.~Konolige, B.~Limketkai, D.~Schulz, and B.~Stewart, ``Distributed multirobot exploration and mapping,'' \emph{Proc. IEEE}, vol.~94, no.~7, pp.~1325--1339, 2006.
%
%\bibitem{Parker1998}
%L.~E.~Parker, ``ALLIANCE: An architecture for fault tolerant multirobot cooperation,'' \emph{IEEE Trans. Robot. Autom.}, vol.~14, no.~2, pp.~220--240, 1998.
%
%\bibitem{Chen2010}
%J. Chen, D. Sun, J. Yang, and H. Y. Chen, ``A leader-follower formation control of multiple nonholonomic mobile robots incorporating receding-horizon scheme,'' \emph{Int. J. Robot. Res.}, vol. 29, pp. 727--747, 2010.
%

%
%\bibitem{Zhang2013}
%G.~Zhang, G.~K.~Fricke, and D.~P.~Garg, ``Spill detection and perimeter surveillance via distributed swarming agents,'' \emph{IEEE/ASME Trans. Mechatron.}, vol.~18, no.~1, pp.~121--129, 2013.
%
%\bibitem{Parragh2008}
%S.~N.~Parragh, K.~F.~Doerner, and R.~F.~Hartl, ``A survey on pickup and delivery problems,'' \emph{Journal fur Betriebswirtschaft}, vol.~58, no.~2, pp.~81--117, 2008.
%
%\bibitem{Cordeau2009}
%S.~Ropke and J.~F.~Cordeau, ``Branch and cut and price for the pickup and delivery problem with time windows,'' \emph{Transport. Sci.}, vol.~43, no.~3, pp.~267--286, 2009.
%
%\bibitem{Montane2006}
%F.~A.~T. Montan\'{e} and R.~D.~Galv\~{a}o, ``A tabu search algorithm for the vehicle routing problem with simultaneous pick-up and delivery service,'' \emph{Comput. Oper. Res.}, vol.,~33, pp.~595--619, 2006.
%
%\bibitem{Bent2006}
%R.~Bent and P.~Van Hentenryck, ``A two-stage hybrid algorithm for pickup and delivery vehicle routing problems with time windows,'' \emph{Comput. Oper. Res.}, vol.~33, pp.~875--893, 2006.
%
%\bibitem{Li2001}
%H.~B.~Li and A.~Lim, ``A metaheuristic for the pickup and delivery problem with time windows,'' in \emph{Proc. IEEE Int. Conf. Tools with Artificial Intelligence}, 2001, pp.~160--167.
%
%\bibitem{Ropke2006}
%S.~Ropke and D.~Pisinger, ``An adaptive large neighborhood search heuristic for the pickup and delivery problem with time windows,'' \emph{Trans. Sci.}, vol.~40, no.~4, pp.~455--472, 2006.
%
%\bibitem{Melvin2007}
%J. Melvin, P. Keskinocak, S. Koeing, C. Tovey, and B. Y. Ozkaya, ``Multi-robot routing with rewards and disjoint time windows,'' in \emph{Proc. IEEE/RSJ Int. Conf. Intell. Robot. Syst.}, 2007, pp. 2332--2337.
%
%\bibitem{Mosteo2005}
%A.~R.~Mosteo, L.~Montano, and M.~G.~Lagoudakis, ``Multi-robot routing under limited communication range,'' in \emph{Proc. IEEE/RSJ Int. Conf. Intell. Robot. Syst.}, 2005, pp.~1531--1536.
%
%\bibitem{SarielTalay2009}
%S. Sariel-Talay, T. R. Balch, and N. Erdogan, ``Multiple traveling robot problem: a solution based on dynamic task selection and robust execution,'' \emph{IEEE/ASME Trans. Mechatron.}, vol. 24, no. 2, pp. 198-206, 2009.
%
%\bibitem{Waisanen2008}
%H. A. Waisanen, D. Shah, and M. A. Dahleh, ``A dynamic pickup and delivery problem in mobile networks under information constraints,'' \emph{IEEE Trans. Autom. Contr.}, vol. 53, no. 6, pp. 1419-1433, 2008.
%
%\bibitem{Pavone2011}
%M. Pavone, E. Frazzoli, and F. Bullo, ``Adaptive and distributed algorithms for vehicle routing in a stochastic and dynamic environment,'' \emph{IEEE Trans. Autom. Contr.}, vol. 56, no. 6, pp. 1259-1274, 2011.
%
%\bibitem{Mathew2015}
%N.~Mathew, S.~L.~Waslander, and S.~L.~Smith, ``Multirobot rendezvous planning for recharging in persistent tasks,'' in \emph{IEEE Trans. Robot.}, vol.~31, no.~1, pp.~128--142, 2015.
%
%%\bibitem{Kirkpatrick1983}
%%S. Kirpatrick, C. D. Gelatt, and M. P. Vecchi, ``Optimization by simulated annealing,'' \emph{Science}, vol. 220, no. 4598, pp. 671--680, 1983.
%
%\bibitem{Liu2014}
%Z.~Liu, W.~D.~Chen, J.~C.~Wang and H.~S.~Wang ``Action Selection for Active and Cooperative Global Localization based on Localizability Estimation,'' \emph{Proc. IEEE Int. Conf. Robot. Biom.}, 2014, pp. 1012--1018.






\end{thebibliography}
\end{document}